\newtheorem{theorem}{Theorem} % enable theorem environment
\newcommand{\eps}{\ensuremath{\varepsilon}\xspace} % for easy \varepsilon
\newcommand{\A}{\ensuremath{\mathcal{A}}\xspace}
\newcommand{\E}{\mathop{\mathbb{E}}}
\newcommand{\Inc}{\ensuremath{\mathsf{M}}}
\newcommand{\Adv}{\ensuremath{\mathsf{Adv}}}
\newcommand{\Advinc}{\Adv^\Inc}
\newcommand{\model}{\ensuremath{A_S}\xspace}
\newcommand{\modeli}{\ensuremath{A_{S^{(i)}}}\xspace}
\begin{document}

\title{Privacy in Practice: Private COVID-19 Detection in X-Ray Images\\\Large Extended Version}

\author{\authorname{
  Lucas Lange\sup{1}\orcidAuthor{0000-0002-6745-0845},
  Maja Schneider\sup{1}\orcidAuthor{0000-0001-5936-1415},
  Peter Christen\sup{2}\orcidAuthor{0000-0003-3435-2015} and
  Erhard Rahm\sup{1}\orcidAuthor{0000-0002-2665-1114}}
\affiliation{\sup{1}Leipzig University \& ScaDS.AI Dresden/Leipzig, Leipzig, Germany}
\affiliation{\sup{2}The Australian National University, Canberra, Australia}
\email{\{lange, mschneider, rahm\}@informatik.uni-leipzig.de, peter.christen@anu.edu.au}
}

\keywords{Privacy-Preserving Machine Learning, Differential Privacy, Membership Inference Attack, Practical Privacy, COVID-19 Detection, Differentially-Private Stochastic Gradient Descent.}

\abstract{
    Machine learning (ML) can help fight pandemics like COVID-19 by enabling rapid screening of large volumes of images.
    To perform data analysis while maintaining patient privacy, we create ML models that satisfy Differential Privacy (DP).
    Previous works exploring private COVID-19 models are in part based on small datasets, provide weaker or unclear privacy guarantees, and do not investigate practical privacy.
    We suggest improvements to address these open gaps.
    We account for inherent class imbalances and evaluate the utility-privacy trade-off more extensively and over stricter privacy budgets.
    Our evaluation is supported by empirically estimating practical privacy through black-box Membership Inference Attacks (MIAs).
    The introduced DP should help limit leakage threats posed by MIAs, and our practical analysis is the first to test this hypothesis on the COVID-19 classification task.
    Our results indicate that needed privacy levels might differ based on the task-dependent practical threat from MIAs.
    The results further suggest that with increasing DP guarantees, empirical privacy leakage only improves marginally, and DP therefore appears to have a limited impact on practical MIA defense.
    Our findings identify possibilities for better utility-privacy trade-offs, and we believe that empirical attack-specific privacy estimation can play a vital role in tuning for practical privacy.
}

\onecolumn \maketitle \normalsize \setcounter{footnote}{0} \vfill

\section{\uppercase{Introduction}}\label{sec:introduction}
The COVID-19 pandemic pushed health systems worldwide to their limits, showing that rapid detection of infections is vital to prevent uncontrollable spreading of the virus.
Detecting COVID-19 in patients can be achieved using a RT-PCR test\footnote{Reverse Transcription Polymerase Chain Reaction (RT-PCR) testing is broadly used for COVID-19 diagnosis.}.
Although they are more reliable in terms of sensitivity than rapid antigen tests, results can take hours to arrive, and even if displaying negative, the virus could have already left the throat and manifested itself in the lungs, rendering it undetectable for either test~\citep{ALBERT2021472}.

\begin{figure}
  \centering
  \includegraphics[trim=100 0 100 0,clip,width=0.32\linewidth]{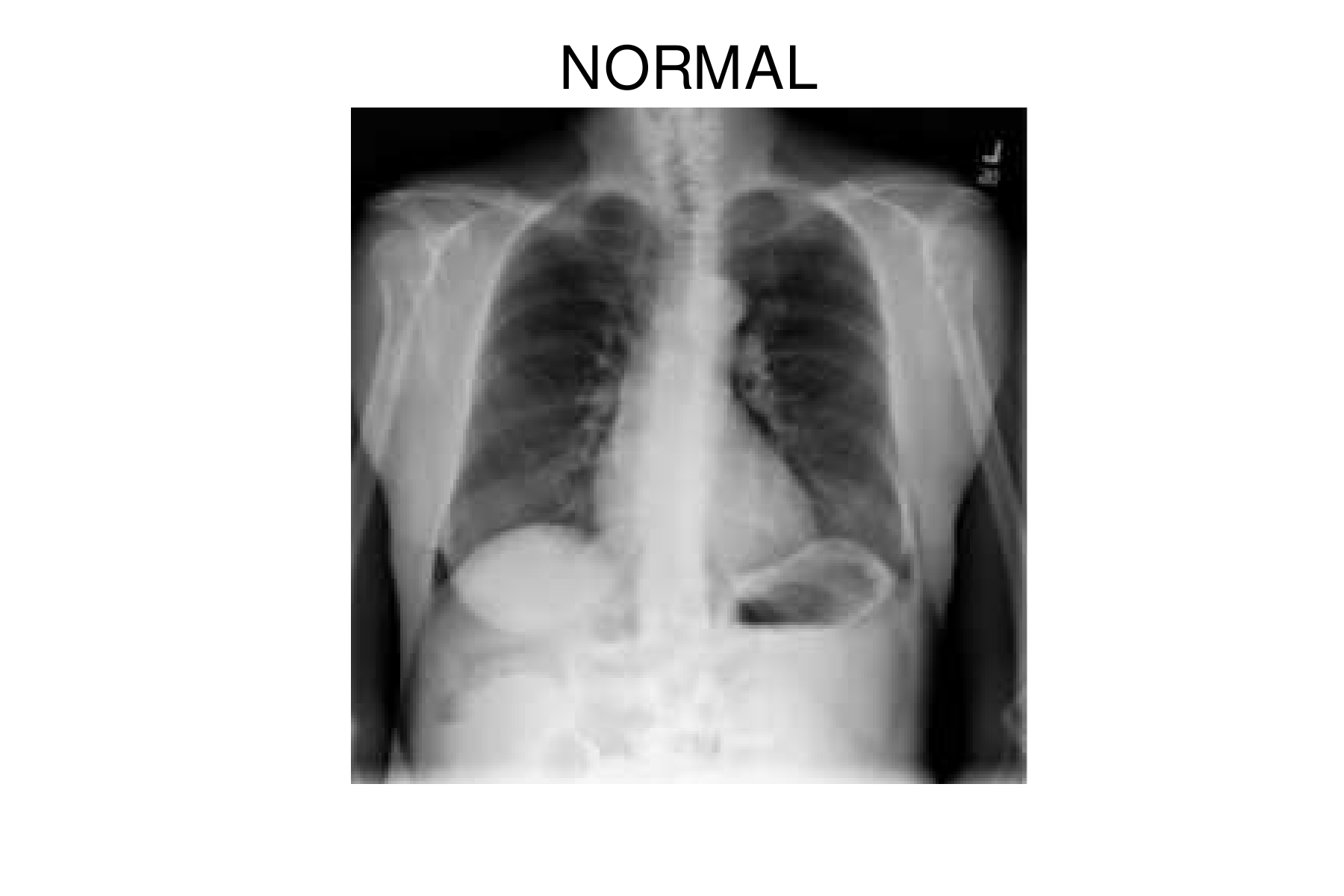}
  \includegraphics[trim=100 0 100 0,clip,width=0.32\linewidth]{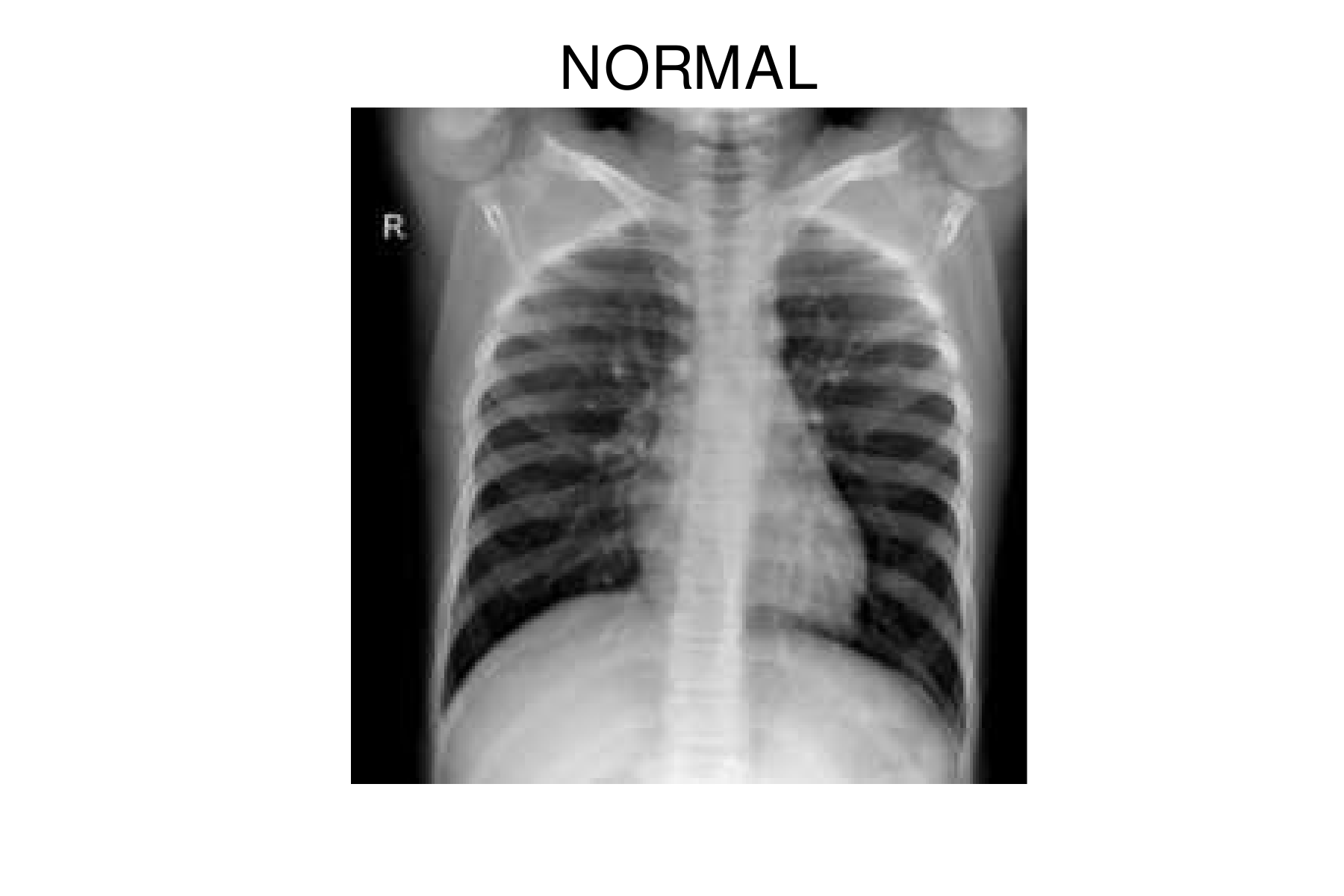}
  \includegraphics[trim=100 0 100 0,clip,width=0.32\linewidth]{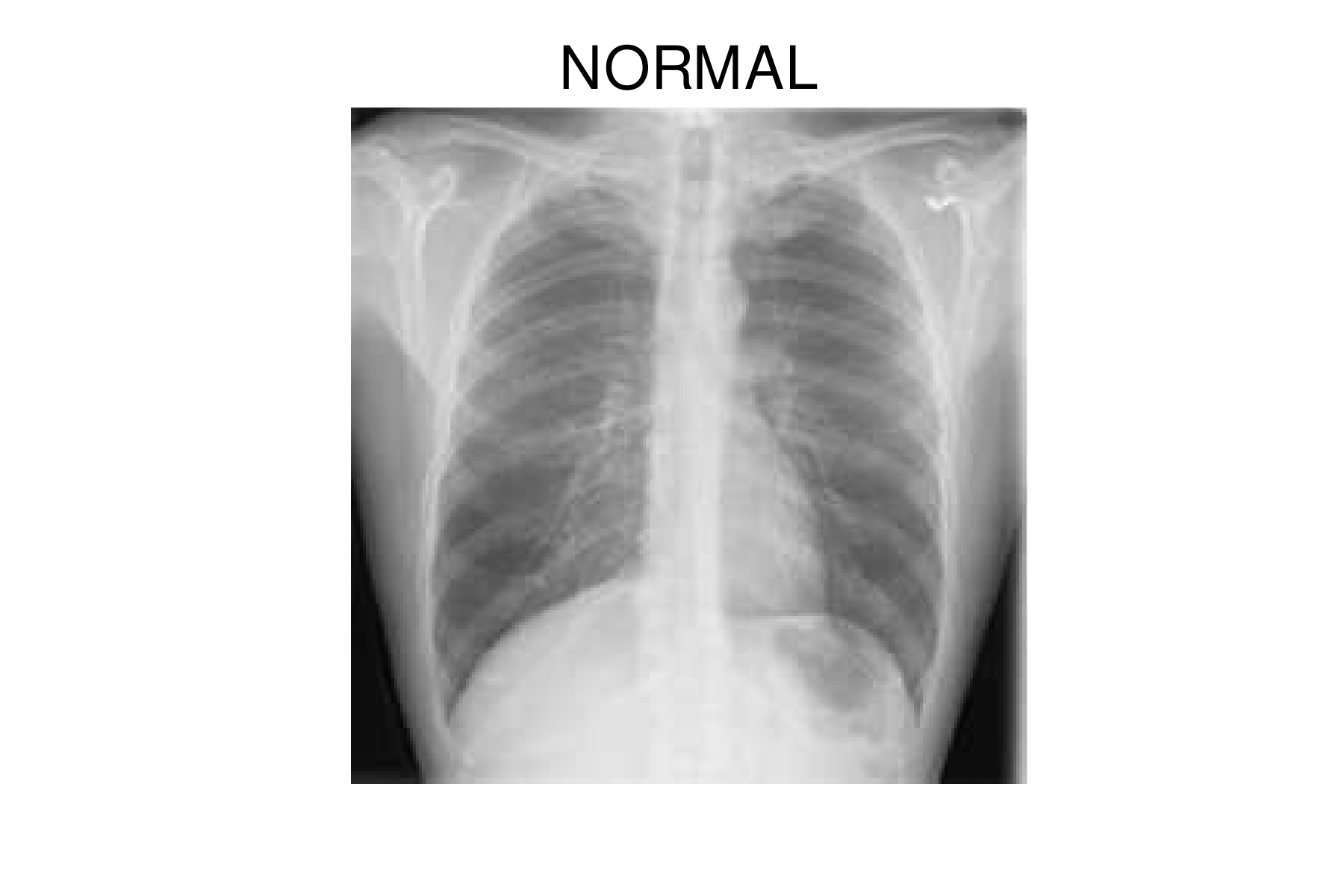}
  \includegraphics[trim=100 0 100 0,clip,width=0.32\linewidth]{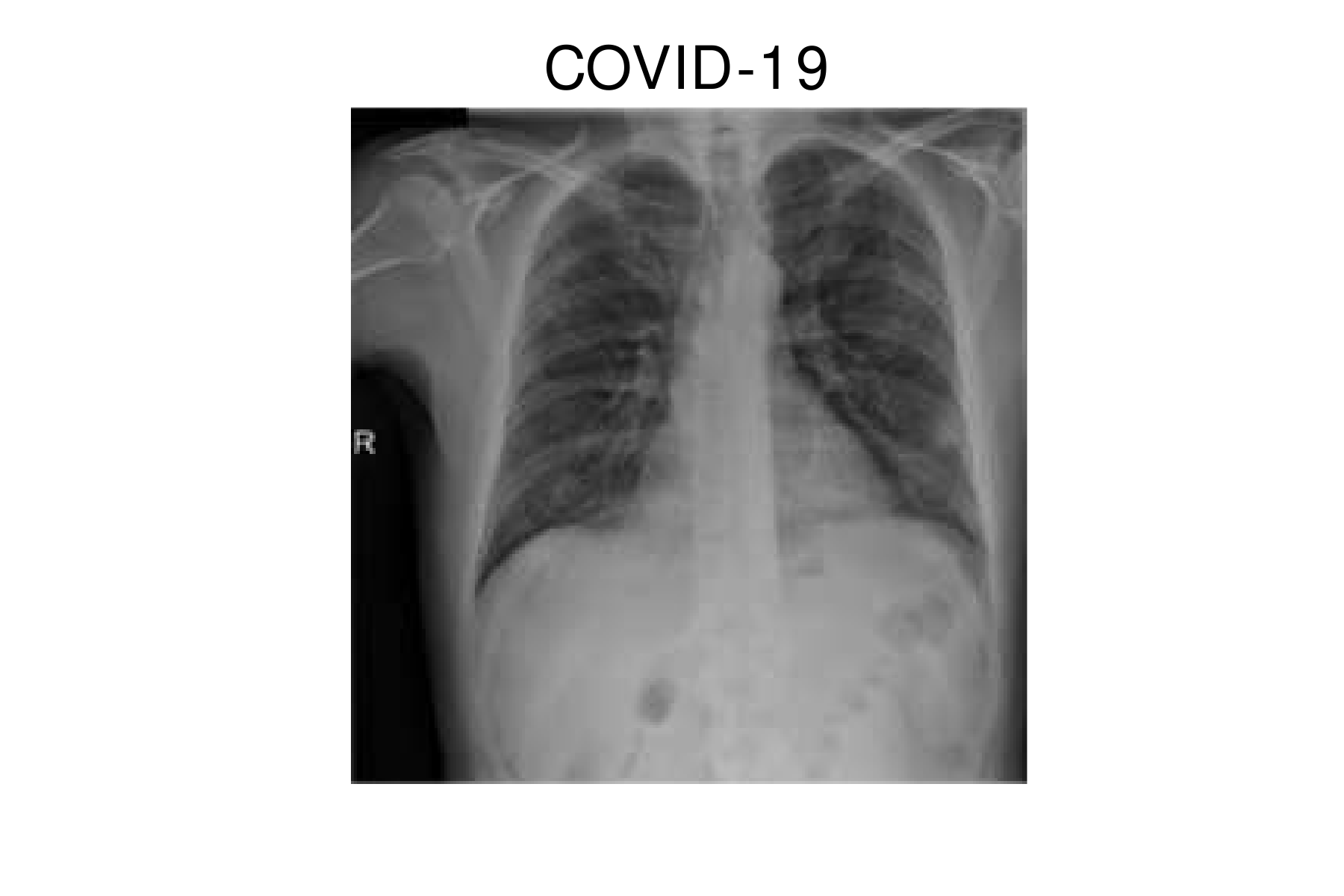}
  \includegraphics[trim=100 0 100 0,clip,width=0.32\linewidth]{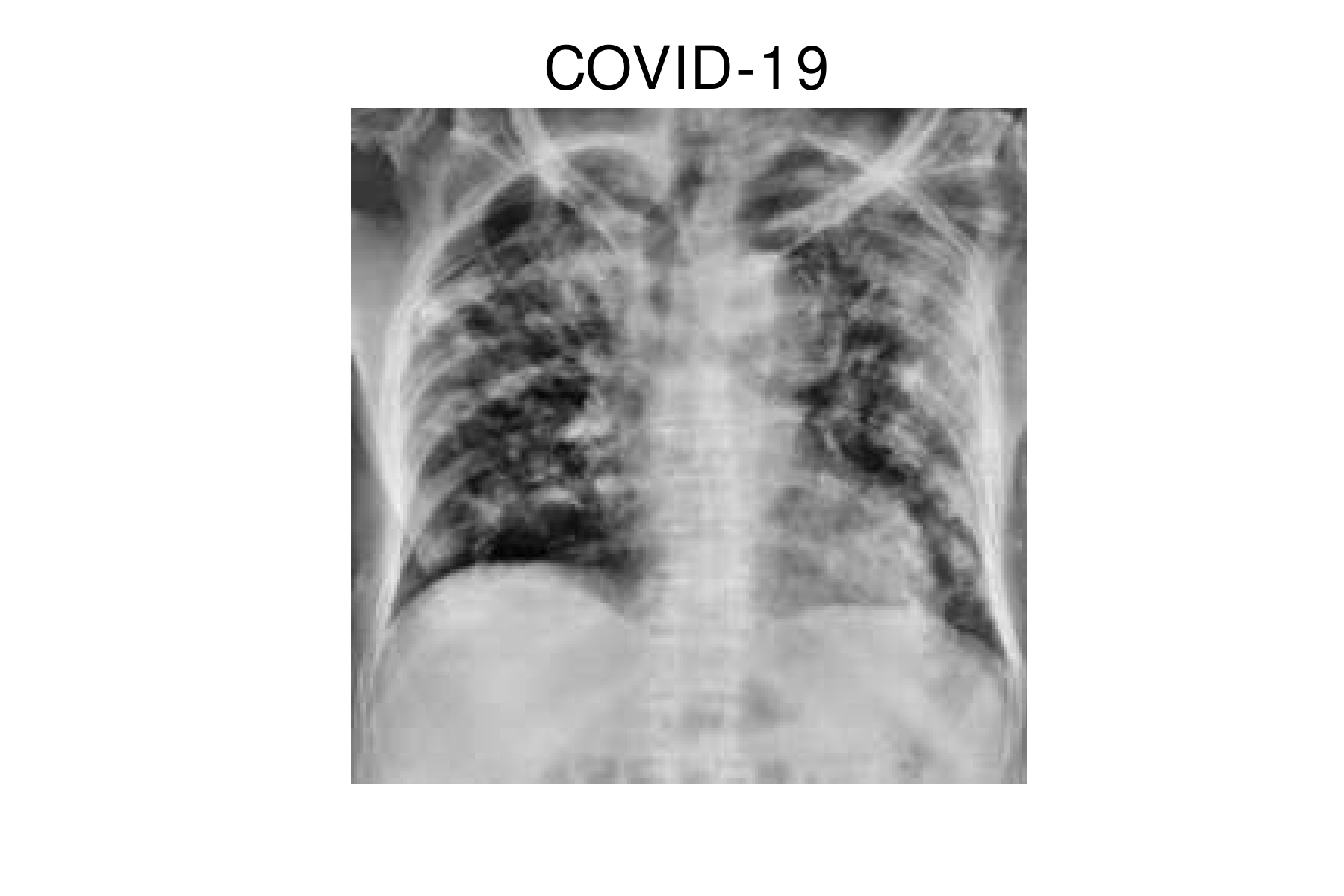}
  \includegraphics[trim=100 0 100 0,clip,width=0.32\linewidth]{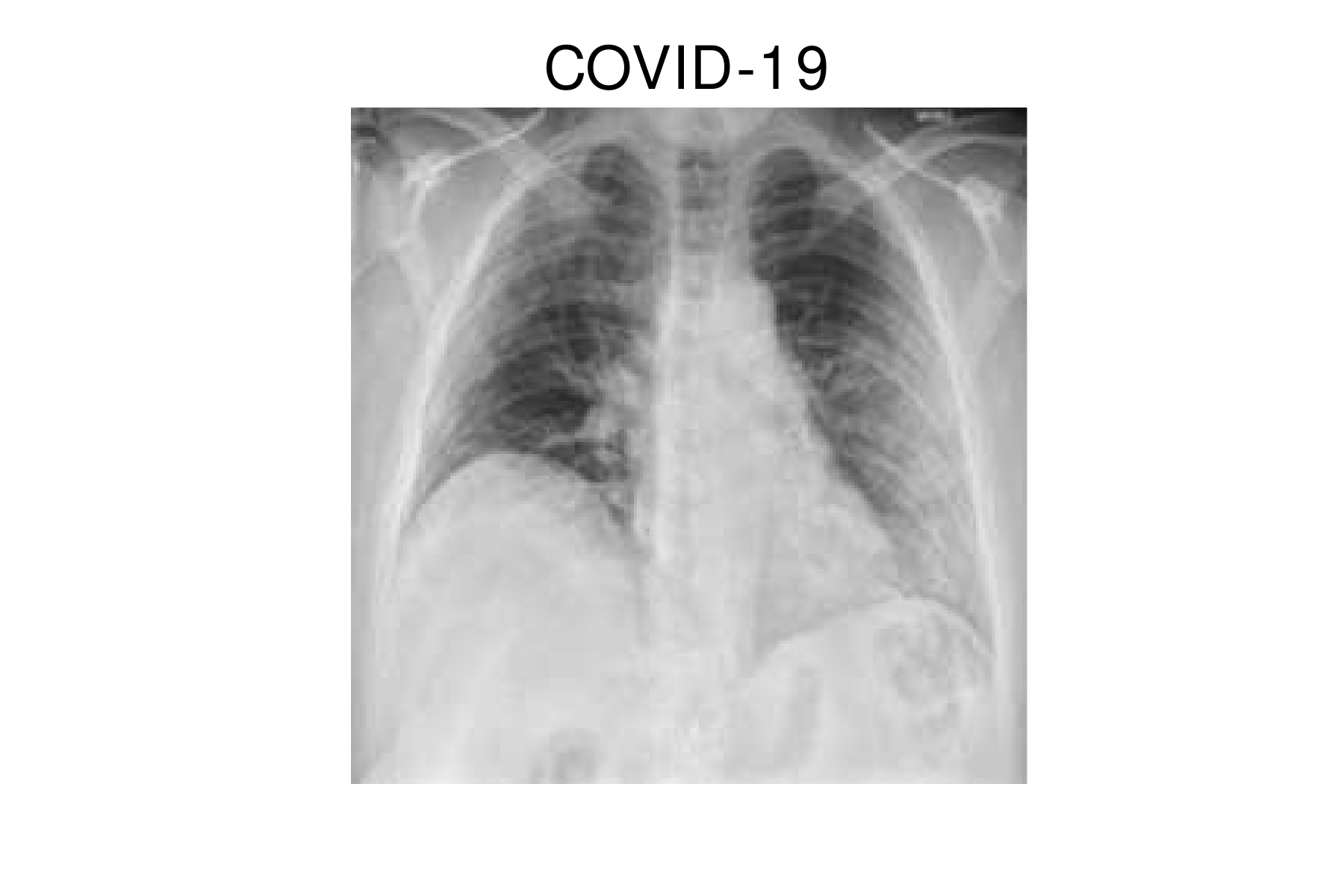}
  \caption{
      Chest X-ray images of different patients extracted from the COVID-19 Radiography Database~\citep{chowdhury2020can,rahman2021exploring}.
      COVID-19 positive scans are characterized by patchy consolidations of the lungs.%~\citep{ng2020imaging}.
  }\label{fig:progression}
%  \Description{
%      Lung X-ray scans of COVID-19 positive and healthy patients.
%      The COVID-19 lungs appear less clear due to patchy consolidations.
%  }
\end{figure}

In hospitals, chest X-rays can mitigate these drawbacks by enabling a fast and reliable diagnosis.
\cref{fig:progression} shows chest X-ray scans of healthy (top) and COVID-19 (bottom) patients in direct comparison.
Even though patchy consolidations are recognizable in the COVID-19 scans, such X-rays remain challenging
to interpret.
Specialists, however, are able to identify the severity of a case early on and can take measures without waiting for lab results.

\begin{table*}
  \caption{
    Existing solutions from related work next to our private model at $\eps=1$~\citep{muftuouglu2020differential,zhang2021feddpgan,ho2022dpcovid}.
    The methods refer to training private models.
    For differentiating the tasks, we assign the classes as COVID-19 (C), Normal (N), or Pneumonia (P).
    A performance comparison is difficult due to the different characteristics.
    Baseline shows the best non-private models.
    Our best private result is based on accuracy for comparison to related work.
    We further include two proposed additions for filling open gaps: F1-score and MIA.
  }\label{tab:eval-related}
  \centering\resizebox{\textwidth}{!}{\begin{tabular}{cccc cc ccc}
\toprule
 && \multirow{2}{*}{Number} && \multicolumn{2}{c}{Accuracy in \%} \\
\cmidrule(lr){5-6}
Related Work & Method & of samples & Task & Baseline & Private & \eps & F1? & MIA? \\
\midrule
\multirow{2}{*}{\citet{muftuouglu2020differential}} & \multirow{2}{*}{PATE on EfficientNet-B0} & 139 COVID-19 & binary & \multirow{2}{*}{94.7} & \multirow{2}{*}{71.0} & \multirow{2}{*}{5.98} & \multirow{2}{*}{$\times$} & \multirow{2}{*}{$\times$} \\
  && 234 Normal & C $|$ N \\
\midrule
\multirow{3}{*}{\citet{zhang2021feddpgan}} & ResNet on images from & 350 COVID-19 & \multirow{2}{*}{multi-class} & \multirow{3}{*}{92.9} & \multirow{3}{*}{94.5} & \multirow{3}{*}{?} & \multirow{3}{*}{$\times$} & \multirow{3}{*}{$\times$} \\
  & DP-GAN trained in & 2,000 Normal & \multirow{2}{*}{C $|$ N $|$ P} \\
  & federated learning & 1,250 Pneumonia \\
\midrule
\multirow{3}{*}{\citet{ho2022dpcovid}} & DP-SGD in federated & 3,616 COVID-19 & \multirow{2}{*}{multi-class} & \multirow{3}{*}{95.3} & \multirow{3}{*}{68.7} & \multirow{3}{*}{39.4} & \multirow{3}{*}{$\times$} & \multirow{3}{*}{$\times$} \\
  & learning on custom CNN & 10,192 Normal & \multirow{2}{*}{C $|$ N $|$ P} \\
  & with spatial pyramid pooling & 1,345 Pneumonia \\
\midrule
\multirow{3}{*}{Lange et al.~(ours)} & DP-SGD on ResNet18 & \multirow{2}{*}{3,616 COVID-19} & \multirow{2}{*}{binary} & \multirow{3}{*}{96.8} & \multirow{3}{*}{75.2} & \multirow{3}{*}{1} & \multirow{3}{*}{\checkmark} & \multirow{3}{*}{\checkmark} \\
  & with tanh activation and & \multirow{2}{*}{5,424 Normal} & \multirow{2}{*}{C $|$ N} \\
  & pre-training on Pneunomia \\
\bottomrule
\end{tabular}
}
\end{table*}

Machine Learning (ML) techniques can effectively assist medical professionals in an initial screening by quickly classifying large numbers of images.
However, the amount of data needed for training such classifiers poses problems due to clinical data privacy regulations, which present strict limitations to data sharing between hospitals.
All sensitive patient information must be treated confidentially before, during, and after processing.~\citep{balthazar2018sharing}

To complicate matters, not only the dataset itself but also the models resulting from ML can compromise privacy.
Published models are vulnerable to attacks, including leaking details about their training data~\citep{shokri2017membership}.
Such leaks allow adversaries to potentially deduce sensitive medical facts about individuals in the dataset, for instance by exposing a patient’s genetic markers~\citep{homerResolvingIndividualsContributing2008}.

In the case of COVID-19 detection, an attacker could be able to reveal if a person was infected, which would already violate privacy.
While the specific risk of X-ray-based attacks might be low, such data should be handled with caution, especially since even with anonymization, results can still be linked to other information like related medications.
Furthermore, we cannot rule out an attacker with internal access to images, e.g. doctors utilizing the model in a hospital.

Privacy-Preserving ML (PPML) is a collection of methods for creating trustworthy ML models, enabling, for example, the development of medical applications while maintaining patient privacy.
In this work, we apply PPML that satisfies Differential Privacy (DP)~\citep{dwork2008differential} in training a COVID-19 detection model, thus limiting attacks on the resulting classifier from incurring information leakage.

Our investigation is divided into three successive steps:
(1) First, a non-private baseline is trained to detect COVID-19 versus normal (no findings) in chest X-rays.
(2) The second step then focuses on experiments evaluating ML model architectures and parameters in private training, with the primary objective of finding a feasible utility-privacy trade-off.
(3) Finally, model privacy is empirically assessed by attempting to identify training data through black-box Membership Inference Attacks (MIAs), examining to what extent these models leak private information.

\subsubsection*{Our contributions are:}
\begin{itemize}
    \item 
      We fill open gaps from previous work~\citep{muftuouglu2020differential,zhang2021feddpgan,ho2022dpcovid}, where \cref{tab:eval-related} shows their characteristics in comparison to our approach.
      We address the class imbalances and analyze the utility-privacy trade-off more extensively by evaluating multiple and stricter privacy budgets.
      We further investigate practical privacy by empirically estimating privacy leakage through black-box MIAs.
      These gaps and our improvements are addressed throughout the following sections.
    \item
      We are the first to evaluate if DP helps narrow down MIAs on the COVID-19 detection task.
      We additionally re-examine this hypothesis on a common benchmarking dataset to reveal connections between the two datasets.
      Our results point towards identifying the benefits from DP in defending against MIAs as task-dependent and plateauing.
      We are able to gain better utility-privacy trade-offs at no practical cost.
      These results thus strengthen the belief that empirical privacy analysis can be a vital tool in supporting attack- and task-specific tuning for privacy.
\end{itemize}

\newpage \noindent
The following \cref{sec:background} provides an overview of essential concepts.
We then contextualize our work by examining the existing literature in \cref{sec:related}, and we present our selected solutions to address open research gaps in \cref{sec:methods}.
\cref{sec:experiments} lays out our experimental setup, with the results presented in \cref{sec:results} and their discussion in \cref{sec:discussion}.
In closing, \cref{sec:conclusion} provides conclusive thoughts and adds an outlook to possible future work.

\section{\uppercase{Background}}\label{sec:background}
This section establishes a basic understanding of the relevant concepts and algorithms used in this work.

\subsection{Differential Privacy}\label{sec:back-dp}
DP is the quasi gold standard in private data analysis, which offers a guarantee that the removal or addition of a single dataset record does not (substantially) affect the outcome of any analysis~\citep{dwork2008differential}.
Thus, an attacker is incapable of differentiating from which of two neighboring datasets a given result originates and has to resolve to a random guess---i.e., a coin flip.
DP's provided guarantee is measured by giving a theoretical upper bound of privacy loss, represented as the privacy budget $\eps$.
The metric is accompanied by the probability of privacy being broken by accidental information leakage, which is denoted as $\delta$ and depends on the dataset size.

Formally, an algorithm A training on a set S is called ($\eps$,$\delta$)-differentially-private, if for all datasets D and D' that differ by exactly one record:
\begin{equation}\label{eq:dp}
  Pr[A(D) \in S] \leq e^{\eps} Pr[A(D') \in S] + \delta
\end{equation}
Meaningful privacy guarantees in ML should fulfill $\eps \leq 1$ and $\delta \ll 1/n$, where $n$ is the number of training samples~\citep{nasr2021adversary,carlini2019secret}.
The notation $\eps=\infty$ indicates that no DP criteria are met.

The design of DP algorithms is based on one of DP's fundamental properties: composability~\citep{dwork2014algorithmic}.
It states that if all the components of a mechanism are differentially-private, then so is their composition.
Another essential attribute of DP is its post-processing immunity, implying DP is preserved by all further processing.
Therefore, in terms of achieved privacy, it does not matter whether a ML model uses an already DP conform dataset or applies DP while training~\citep{dwork2014algorithmic}.

\subsection{Differentially-Private Stochastic Gradient Descent}\label{sec:back-dpsgd}
The Differentially-Private Stochastic Gradient Descent (DP-SGD) algorithm introduced by \citet{abadi2016deep} takes widely used SGD and applies a gradient perturbation strategy.
Gradient perturbation adds enough noise to the intermediate gradients to obfuscate the largest value, since that original sample inhibits the highest risk of exposure.
To generally bound the possible influence of individual samples while training, DP-SGD clips gradient values to a predefined maximum Euclidean norm before adding noise.
The noisy gradients are then used to update the parameters as usual.
The total noise added through the algorithm is composed over all training iterations using an accounting mechanism and determines the resulting privacy budget.

\subsection{Membership Inference Attacks}\label{sec:back-membership}
In black-box MIAs, an attacker feeds data samples to a target model and thereby tries to figure out each sample's membership or absence in the model's training set based solely on the returned confidence values.
This technique takes advantage of the differences in predictions made on data used for training versus unseen data, where the former is expected to output higher confidence values due to memorization~\citep{carlini2019secret}.
As proposed by \citet{shokri2017membership}, such attacks can utilize multiple shadow models specifically mimicking a target model's predictions, to train an attack model able to elicit the desired membership information.
\citet{salem2019ml} relaxed the need for shadow models, by finding that simply using the original model's predictions on given samples can be sufficient to deduce their membership.

By revealing the membership of an individual's record in the dataset, an adversary might in turn disclose sensitive information on them.
However, a decisive prerequisite is some form of existing prior knowledge of the target data, because the MIA only tests samples available to the attacker~\citep{shokri2017membership}.
Thus, the attacker needs to possess a set of individual samples to uncover their membership.

\section{\uppercase{Related Work}}\label{sec:related}
In the following, we first describe gaps left open by related work in \cref{sec:related-detection}.
We then show mitigation strategies for MIAs and methods of practical privacy analysis in \cref{sec:back-repelling,sec:related-practical}, respectively.

%\subsection{Private Detection of COVID-19 in X-Rays}\label{sec:related-detection}
\subsection{Private COVID-19 X-Ray Detection}\label{sec:related-detection}
In \cref{tab:eval-related}, existing works on private COVID-19 detection from X-rays are summarized and compared to our approach.
There are multiple factors that impede a fair comparison, which mainly lie in the differences in datasets, tasks, and privacy guarantees (\eps).
In this section, we show open gaps and then give elaborations in \cref{sec:methods} on how we address them.

% We first give a quick overview of the existing methods.
% The authors of~\citep{muftuouglu2020differential} use the Private Aggregation of Teacher Ensembles (PATE)~\citep{papernot2016semi}.
% Ho et al.~\citep{ho2022dpcovid} employ an approach that merges DP-SGD and Federated Learning (FL).
% Zhang et al.~\citep{zhang2021feddpgan} introduce the FedDPGAN, a Differentially-Private Generative Adversarial Network (DP-GAN) that integrates into the FL framework.

% In general, we find the following open gaps.
\textit{Datasets.}
    A problem regarding~\citep{muftuouglu2020differential} is that their results are based on only a small dataset of 139 COVID-19 scans.
    The COVID-19 Radiography Database used by~\citep{ho2022dpcovid} and us, provides a better basis in terms of dataset size.
    However, the class imbalances result in a rather skewed data basis, which is left unaddressed but could influence MIA threat~\citep{jayaramanRevisitingMembershipInference2021}.
    With the FedDPGAN approach, \citet{zhang2021feddpgan} try to enlarge and balance their small dataset using synthetic images, but the quality of the generated distribution is left unanswered.
    This is particularly problematic because GANs trained on imbalanced input data tend to produce data with similarly disparate impacts~\citep{ganev2022robin}.
    As a general problem with skewness, the mentioned works solely assess performance using accuracy, although this metric is known to undervalue false negatives for minority classes and could favor classifiers that are actually worse in detecting the COVID-19 minority class~\citep{bekkar2013evaluation}.

\textit{Privacy budgets.}
    The used \eps-values of 5.98 and 39.4 by \citet{muftuouglu2020differential} and \citet{ho2022dpcovid} respectively, are significantly weaker than the privacy budget of $\eps \leq 1$, which is commonly assumed to provide strong privacy~\citep{nasr2021adversary,carlini2019secret}.
    Furthermore, the results by \citet{zhang2021feddpgan} lack comparability, since they do not provide their privacy budget.
    Using their parameters and noise in a standard DP-SGD analysis results in $\eps>5*10^{13}$ for a client after 500 rounds\footnote{They do not state their exact number of rounds but their graphs show 500 rounds.} of federated training.
    Even with their most private setting they still accumulate $\eps=19.6$.
    Thus, no model adheres to $\eps \leq 1$ and they instead only offer weaker or unclear guarantees.

\textit{Practical privacy.}
    Regarding practical privacy, prior work does not include actual attack scenarios.
    It is therefore left open to what extent the provided models and \eps-guarantees retain patient privacy against real adversaries.
    Such analysis helps in assessing the defense capabilities provided by the achieved privacy budgets and could reveal room for tuning them.

%\subsection{Repelling Membership Inference Attacks}\label{sec:back-repelling}
\subsection{Repelling MIAs}\label{sec:back-repelling}
Related work suggests multiple strategies for reducing MIA threats.
\citet{shokri2017membership} show that limiting the model outputs to only class labels instead of explicit confidence values can be an effective remedy.
However, in medical tasks such as COVID-19 detection, where the use case is to help medical professionals in diagnosing a disease, the confidence value is an integral part that indicates how likely a patient is affected.
\citet{shokri2017membership} also find that model architecture can contribute to MIA defense and \citet{salem2019ml} demonstrate that even the training process can hinder MIAs through e.g. model stacking.

DP should limit and oppose the success of MIAs by design, with \citet{jayaraman2019evaluating} supplying the corresponding reasoning:
\enquote{[DP], by definition, aims to obfuscate the presence or absence of a record in the data set.
  On the other hand, [MIAs] aim to identify the presence or absence of a record \textelp{}.}
\citet{rahman2018membership} test this hypothesis by evaluating MIAs on different privacy levels.
They find their model's MIA resistance to gradually increase when lowering the allowed privacy budget and explain it with less overfitting when adding more noise.
\citet{yeom2018privacy} prove that overfitting in ML models is sufficient to enable MIAs, but at the same time show that overfitting is not a necessary criterion, and stable models can still be vulnerable.

\subsection{Practical Privacy Analysis}\label{sec:related-practical}
Multiple works examined the possibilities of estimating the practical privacy for ML models by performing an empirical study through attacks, e.g. MIAs.
\citet{jagielski2020auditing} and \citet{nasr2021adversary} conclude that the assumed theoretical upper bound privacy loss for DP, given in the privacy budget \eps, gives a tight worst-case analysis on attack proneness and thereby limits MIA success.
However, in many cases actual attacks extract significantly less information than assumed by the theoretical bound,  which is also supported by \citet{malek2021antipodes} and \citet{jayaraman2019evaluating}.
This discrepancy could possibly enable better utility-privacy trade-offs, but \citet{jayaraman2019evaluating} warn that privacy always comes at a cost and reducing privacy could ultimately promote information leakage.
\citet{malek2021antipodes} propose that a realistic lower bound on the amount of revealed information by a model can be determined by \enquote{[considering] the most powerful attacker pursuing the least challenging goal} and that in the case of standard DP, such would be an attacker powerful enough to successfully perform membership inference.
Therefore, by attacking our models with MIAs, we can empirically estimate practical privacy leakage which might differ substantially from the upper leakage bound derived from DP theory.

\section{\uppercase{Methods}}\label{sec:methods}
As seen in \cref{sec:related-detection} and \cref{tab:eval-related}, related work on COVID-19 detection lacks comparability and leaves open research gaps.
We therefore do not solely focus on enhancing the performance of former solutions but rather suggest improvements by filling existing gaps, ultimately proposing the following improvements.

\textit{Datasets.}
    Since the dataset used by us and \citet{ho2022dpcovid} provides a good amount of COVID-19 samples, we instead aim for better handling of the problems arising from the skewed nature of the class distribution.
    In a first effort, we employ random undersampling and class weights to elevate the underrepresented COVID-19 class, both in database construction and in training, respectively.
    Furthermore, since accuracy is not representative in cases of skewed data, we improve the evaluation by using the more balanced F1-score metric~\citep{bekkar2013evaluation}.
    
\textit{Privacy budgets.}
    We investigate the utility-privacy trade-off by evaluating multiple and stricter privacy budgets of $\eps=[\infty, 10, 1, 0.1]$.
    To find the best private model and extend the pool of evaluated methods, we propose untested architectural experiments relevant to private DP-SGD training in \cref{sec:archis}.
    
\textit{Practical privacy.}
    As seen in the works discussed in \cref{sec:related-practical}, we investigate the practical implications of DP regarding the defense against black-box MIAs by undertaking an empirical analysis through actual attacks, and therefore give a more realistic lower bound to the resulting privacy leakage~\citep{jagielski2020auditing,malek2021antipodes}.
    We thereby provide the first attack results in the field of private COVID-19 detection and evaluate possible room for tuning the utility-privacy trade-off.
    An additional evaluation regarding the privacy leakage of our models on the MNIST database enables us to formulate takeways regarding similarities and disproportions regarding the attack-specific privacy on both datasets.
    Evaluating another dataset is a first step towards generalization and MNIST is particularly interesting because related works~\citep{rahman2018membership,nasr2021adversary} previously investigated the connection between DP and MIA on this task.

\section{\uppercase{Experimental Setup}}\label{sec:experiments}
\begin{table}
  \caption{
    Summary of the experiment parameters.
    Each combination from left to right constitutes a possible setup (resulting in $2*2*2*3*4=96$ setups).
  }\label{tab:experiments}
  \centering\resizebox{0.47\textwidth}{!}{\begin{tabular}{ccccc}
\toprule
Dataset & Architecture & Activation & Pre-training & \eps \\
\midrule
\multirow{3}{*}{COVID-19} & \multirow{3}{*}{ResNet18} & \multirow{3}{*}{ReLU} & \multirow{2}{*}{None/Standard} & $\infty$ \\
\multirow{3}{*}{MNIST} & \multirow{3}{*}{ResNet50} & \multirow{3}{*}{tanh} & \multirow{2}{*}{ImageNet} & 10 \\
  &&& \multirow{2}{*}{Pneumonia} & 1 \\
  &&&& 0.1 \\
\bottomrule
\end{tabular}}
\end{table}

In this section we provide details on the setups used in our experiments, which are summarized in \cref{tab:experiments}.
Reference code is available from our repository\footnote{\url{https://github.com/luckyos-code/mia-covid}}.

\subsection{Environment}
The implementation uses Python with Keras\footnote{\url{https://keras.io/}} and Tensorflow\footnote{\url{https://www.tensorflow.org/}}.
Further, we employ the modules for DP-SGD training and MIAs from the Tensorflow Privacy library\footnote{\url{https://github.com/tensorflow/privacy}}.
To enable consistent and reproducible results, random seeds are set to a fixed value, which is 42 in our case.
Hardware-wise our machines are equipped with 64GB of RAM and an NVIDIA Tesla V100-PCIE-32GB GPU.

\subsection{Datasets}\label{sec:datasets}
For a comprehensible dataset creation, we provide details on the different public datasets we used.

\begin{itemize}
\item The \textit{COVID-19 Radiography Database\footnote{\url{https://www.kaggle.com/tawsifurrahman/covid19-radiography-database}}}~\citep{chowdhury2020can,rahman2021exploring} is the most comprehensive collection of COVID-19 chest X-ray images, stemming from different databases around the web.
In total, this image collection offers chest X-rays of 3,616 COVID-19 positive, 10,192 Normal, and 1,345 Pneumonia cases.
For our binary task, we omit the pneumonia samples and employ undersampling to directly reduce class imbalances.
Dataset construction takes all COVID-19 scans but only 1.5$\times$ the amount for Normal (5,424), resulting in 9,040 images total.
When testing hyperparameters, this ratio showed to elevate performance (F1) and reduce privacy risk due to less overfitting~\citep{rahman2018membership}.

\item The \textit{Chest X-Ray Images (Pneumonia)}\footnote{\url{https://www.kaggle.com/paultimothymooney/chest-xray-pneumonia}}~\citep{kermany2018identifying} offers X-ray images divided into two classes with 1,583 Normal and 4,273 Pneumonia samples.
Here, we again apply undersampling to achieve similar class ratios and take all Normal scans but just 1.5$\times$ the amount for Pneumonia (2,374).
This pneumonia dataset is also part of the COVID-19 Radiography Database, constituting 13\% (1,341) of its Normal class images.
To fix this issue and enable its use as a public dataset for our private transfer learning approach without compromising privacy, we exclude duplicates when sampling images for the COVID-19 task.

\item The \textit{ImageNet}~\citep{deng2009imagenet} is a vast collection counting 14 million images and covering 20,000 categories from general (mammal) to specific (husky).
Non-private models benefit from using this massive dataset for pre-training, introducing many differentiating concepts to a neural network before training on the target data.

\item With their \textit{MNIST Database}~\citep{lecun1998gradient}, LeCun et al. offer a large image collection of handwritten digits.
The database provides 60,000 images for training and 10,000 for testing.
Even though MNIST does not contain COVID-19 related images, it is a commonly used benchmark in image classification and PPML, making it a perfect candidate for comparing results. %, see e.g.~\citep{carlini2019secret,abadi2016deep,papernot2016semi,papernot2018scalable,papernot2021tempered,yu2021not,bagdasaryan2019differential,rahman2018membership,yeom2018privacy,shokri2017membership,nasr2021adversary}.
\end{itemize}

Our experiments are mainly focusing on the task-relevant COVID-19 Radiography Database as the evaluation basis.
Additionally, models are then evaluated on the MNIST database.
Pre-training uses the respective public ImageNet and Pneumonia datasets.

\subsection{Pre-processing}\label{sec:preprocessing}
To build our final splits for model training, we employ necessary sampling and pre-processing steps.
Both X-ray datasets, for COVID-19 and pneumonia, use a train-validation-test split of 80\% training, 5\% validation, and 15\% test set.
All datasets are handled with three color channels.
We therefore convert the MNIST grey-scale images into the RGB space, as this is vital to allow the models pre-trained on color images to still work with the input data.
X-ray images are downscaled to 224x224 pixels, while MNIST images keep their size of 28x28.
Furthermore, each data point undergoes an image normalization step using the factor of x=1/255.

To combat overfitting, training sets are shuffled and training images from the X-ray datasets are subjected to data augmentation~\citep{shorten2019survey}.
An problem already improved from the X-ray datasets' undersampling is their imbalance regarding Sickness (COVID-19, Pneumonia) and Normal class frequencies.
To further improve on this issue, we apply class weights during training to help artificially balance each sample's impact.
This process yields class weights of 0.83 for the Normal and 1.25 for the Sickness classes.

\subsection{Architectural Experiments}\label{sec:archis}
In the following, we describe our different architectural choices we used for experiments.

\subsubsection{Model Size in DP-SGD}
Non-private and private classification perform differently depending on the underlying model architecture~\citep{papernot2021tempered}.
Performance is greatly dependent on model size, i.e., the number of layers or parameters in a model, with non-private training typically benefiting from using bigger models.
In contrast, using DP-SGD, the same models suffer from accuracy loss when increasing in size.
Taking these findings into account, in our experiments we used two differently-sized architectures.

\subsubsection{ResNet Models}
The model family of Residual Networks~\citep{he2016deep}, or ResNets, provides well-scaling deep CNN architectures thanks to its residual connections.
We utilize two different versions as our basic model architectures for the experiments.
For one, the ResNet50 version boasting 50 layers, which is a stretched compromise between size and computing needs, as well as the ResNet18, which is mostly identical except for the lower layer count of 18 layers.
When introducing architectural experiments, we change each basic model accordingly, creating different variants.

\subsubsection{tanh Activation}
A disruptive discovery in DP-SGD research was made by \citet{papernot2021tempered}.
In their work, they determined that replacing the de facto standard ReLU activation function with the tanh function in model layers improves performance in DP-SGD.
To achieve this boost, they utilize the fact that the tanh activation generally results in smaller gradients than the ReLU function, which in turn reduces the information loss from gradient clipping.

\subsubsection{Pre-training}
An important consideration in private training is the use of public pre-training datasets.
This is due to the advantage that public datasets do not require the same noisy training mechanisms as private datasets.
On the basis of the resulting pre-tuned weights, the pre-trained model is then fine-tuned to the private target data for the actual task.

A commonly applied strategy to improve performance for non-private classification relies on pre-training using the extensive ImageNet collection.
As another method, \citet{abadi2016deep} state that DP-SGD models can further profit from pre-training in a domain closely related to the target task.
While ImageNet resembles a general choice for image-based tasks, pre-training for pneumonia detection is closer to our COVID-19 task due to the similarity in symptoms~\citep{speranskaya2020radiological,lange2022privacy}.

The pre-training on the Pneumonia dataset is performed using the same settings as on the COVID-19 set, while the ImageNet variants are provided by a library for Keras models\footnote{\url{https://github.com/qubvel/classification_models}}.
For our tanh variants we take the take the pre-trained ReLU models and change the activation function in each trainable layer before training on our target datasets.

\subsection{Privacy Experiments}
In this section, we elaborate on the used settings and hyperparameters for evaluating privacy.

\subsubsection{Private Training Settings}
All model variants are first trained non-privately using Adam~\citep{Kingma2015AdamAM} optimization at $\eps=\infty$ to form a baseline.
We employ batch sizes of 32 and train for 20 epochs using a learning rate of $\alpha=1\mathrm{e}{-3}$, which decays down to a minimum of $\alpha=1\mathrm{e}{-6}$ on plateaus.
Afterwards, we apply DP-SGD (or here DP-Adam) training to all models, training a private candidate for each $\eps$-guarantee.
The DP-SGD algorithm is applied by changing the optimizer and handing in the necessary privacy parameters, like our employed clipping norm of 1.0.
DP-SGD training for COVID-19 uses ResNet50 and ResNet18 variants with batch sizes of 8 and 16, instead of 32 respectively.
We aim at privacy budgets of $\eps \leq 1$, since such values present strong privacy guarantees~\citep{nasr2021adversary,carlini2019secret}.
We also evaluate budgets neighboring this setting by an order of magnitude, to gain further insights into the performance and estimated privacy on different DP levels.
Due to the dataset size, the DP analysis uses $\delta=1\mathrm{e}{-4}$ for COVID-19 ($n=9,040$) and $\delta=1\mathrm{e}{-5}$ for MNIST ($n=60,000$).

\subsubsection{MIA Settings}
For selecting the most potent MIA each run, we try four different attack types based on logistic regression, multi-layered perceptron, k-nearest neighbors, and threshold.
These attacks found in the Tensorflow Privacy library are an implementation of the single shadow model black-box attack proposed by \citet{salem2019ml}, that directly relies on target model predictions instead of training several shadow models.
Given a target model, MIAs utilize two types of data for training: (1) the original training data to be inferred and (2) unseen but similar data to differentiate non-training data.
In our case, we want to fully empower the attacker for estimating the practical worst-case in an optimal black-box setting~\citep{malek2021antipodes}.
We satisfy this condition by giving access to the full training and test sets with their corresponding labels, thus, handing the attacker the largest input regarding (1) and the most similar input regarding (2).

\subsubsection{Measuring Privacy Leakage}\label{sec:metric}
Like \citet{jayaraman2019evaluating}, our used metric for measuring privacy leakage through MIAs is the attacker's membership advantage as introduced by \citet{yeom2018privacy}.
The adversarial game is based on an attacker's capabilities in differentiating the membership of a sample that is chosen uniformly at random to originate from the training set or not.
The resulting difference in True Positive Rate (TPR) and False Positive Rate (FPR) is then given as the attacker's membership advantage:
\begin{equation}
    \Advinc = TPR - FPR
\end{equation}

\citet{yeom2018privacy} show that if a learning algorithm satisfies \eps-DP, then the adversary’s membership advantage is bounded by $\Advinc \leq e^{\eps}-1$ in their attack scenario.
Transferring the theorem to ($\eps$,$\delta$)-DP given by \cref{eq:dp}, the upper bound can be derived as:
\begin{equation}\label{eq:dpbound}
    \Advinc \leq e^{\eps}-1+\delta
\end{equation}
Because the theoretical assumption relies on Gaussian distributed training errors and a balanced prior data distribution probability, it might not provide reliable bounds given our differing practical scenario.

Since individual MIA results are subject to variability, they need to be experimentally stabilized.
Like \citet{malek2021antipodes}, we achieve this by running 100 entire MIAs and calculating the corresponding 95\% Confidence Interval (CI) for the obtained results.

\begin{table*}[t]
  \caption{
    Experimental results on the COVID-19 dataset.
    The Standard, ImageNet and Pneunomia models rely on the ReLU activation function, which is then changed to tanh in the respective counterparts.
    Model variants are evaluated across multiple DP budgets $\eps$, where $\eps=\infty$ translates to non-private training.
    They are matched by accuracy and F1-score in \%, as well as empirical privacy leakage from MIAs, measured by the membership advantage ($\Advinc$) and given as a 95\% CI over 100 attacks.
    If training resulted in an F1-score of 0.0, no feasible model was derived, making accuracy and attacks obsolete (NA).
  }\label{tab:eval-covid}
  \centering\resizebox{\textwidth}{!}{\begin{tabular}{c ccc ccc ccc ccc}
\toprule
 & \multicolumn{3}{c}{$\eps = \infty$} & \multicolumn{3}{c}{$\eps = 10$} & \multicolumn{3}{c}{$\eps = 1$} & \multicolumn{3}{c}{$\eps = 0.1$} \\
\cmidrule(lr){2-4} \cmidrule(lr){5-7} \cmidrule(lr){8-10} \cmidrule(lr){11-13}
Variant & \%-Acc. & \%-F1 & $\Advinc$ & \%-Acc. & \%-F1 & $\Advinc$ & \%-Acc. & \%-F1 & $\Advinc$ & \%-Acc. & \%-F1 & $\Advinc$ \\
\midrule
\multicolumn{13}{c}{ResNet18} \\
\midrule
 Standard &
  91.4 & 89.5 & 0.22--0.24 &
  71.2 & 57.8 & 0.22--0.24 &
  NA & 0.0 & NA &
  NA & 0.0 & NA \\
 ImageNet &
  \textbf{96.8} & \textbf{95.9} & 0.25--0.27 &
  \textbf{85.5} & \textbf{79.4} & 0.25--0.27 &
  NA & 0.0 & NA &
  NA & 0.0 & NA \\
 Pneumonia &
  92.2 & 89.8 & 0.22--0.24 &
  71.5 & 57.2 & 0.23--0.25 &
  70.5 & 54.3 & 0.22--0.24 &
  71.3 & 61.4 & 0.22--0.23 \\
 tanh-Standard &
  85.1 & 82.8 & \textbf{0.21--0.23} &
  71.8 & 67.9 & 0.21--0.23 &
  71.5 & 62.5 & 0.20--0.22 &
  68.0 & 63.0 & 0.20--0.22 \\
 tanh-ImageNet &
  91.4 & 89.8 & 0.22--0.24 &
  57.5 & 65.2 & \textbf{0.19--0.21} &
  44.5 & 58.9 & 0.20--0.22 &
  50.8 & 61.0 & \textbf{0.19--0.21} \\
 tanh-Pneumonia &
  79.9 & 78.6 & \textbf{0.21--0.23} &
  73.9 & 73.1 & 0.21--0.24 &
  \textbf{75.2} & 70.5 & 0.22--0.24 &
  72.9 & 65.8 & 0.21--0.22 \\
\midrule
\multicolumn{13}{c}{ResNet50} \\
\midrule
 Standard &
  91.6 & 89.3 & 0.25--0.27 &
  NA & 0.0 & NA &
  NA & 0.0 & NA &
  NA & 0.0 & NA \\
 ImageNet &
  95.6 & 94.4 & 0.25--0.26 &
  NA & 0.0 & NA &
  NA & 0.0 & NA &
  NA & 0.0 & NA \\
 Pneumonia &
  91.4 & 89.6 & 0.24--0.26 &
  NA & 0.0 & NA &
  NA & 0.0 & NA &
  NA & 0.0 & NA \\
 tanh-Standard &
  78.8 & 78.0 & \textbf{0.21--0.23} &
  72.3 & 63.4 & 0.22--0.23 &
  70.4 & 62.9 & \textbf{0.19--0.21} &
  68.6 & 62.0 & \textbf{0.19--0.21} \\
 tanh-ImageNet &
  88.8 & 84.9 & 0.23--0.25 &
  47.9 & 60.1 & \textbf{0.19--0.21} &
  46.0 & 59.3 & \textbf{0.19--0.21} &
  50.8 & 60.7 & \textbf{0.19--0.21} \\
 tanh-Pneumonia &
  81.3 & 80.1 & \textbf{0.22--0.23} &
  72.0 & 72.7 & 0.21--0.23 &
  72.0 & \textbf{72.5} & 0.21--0.23 &
  \textbf{73.0} & \textbf{69.4} & 0.21--0.23 \\
\bottomrule
\end{tabular}}
\end{table*}

\begin{table*}
  \caption{
    Experimental results on the MNIST database.
    See \cref{tab:eval-covid} caption for details.
    F1-score is given as the macro average over the 10 classes.
    Pneumonia pre-trained models are omitted from the evaluation, since the tasks are not closely related.
  }\label{tab:eval-mnist}
  \centering\resizebox{\textwidth}{!}{\begin{tabular}{c ccc ccc ccc ccc}
\toprule
 & \multicolumn{3}{c}{$\eps = \infty$} & \multicolumn{3}{c}{$\eps = 10$} & \multicolumn{3}{c}{$\eps = 1$} & \multicolumn{3}{c}{$\eps = 0.1$} \\
\cmidrule(lr){2-4} \cmidrule(lr){5-7} \cmidrule(lr){8-10} \cmidrule(lr){11-13}
Variant & \%-Acc. & \%-F1 & $\Advinc$ & \%-Acc. & \%-F1 & $\Advinc$ & \%-Acc. & \%-F1 & $\Advinc$ & \%-Acc. & \%-F1 & $\Advinc$ \\
\midrule
\multicolumn{13}{c}{ResNet18} \\
\midrule
 Standard &
  \textbf{99.5} & \textbf{99.5} & \textbf{0.18--0.20} &
  95.9 & 95.9 & 0.18--0.20 &
  90.2 & 90.1 & 0.19--0.21 &
  24.3 & 19.9 & 0.13--0.15 \\
 ImageNet &
  \textbf{99.5} & \textbf{99.5} & \textbf{0.18--0.20} &
  95.2 & 95.1 & 0.18--0.20 &
  38.0 & 35.2 & \textbf{0.13--0.14} &
  14.7 & 12.7 & 0.15--0.17 \\
 tanh-Standard &
  99.2 & 99.2 & 0.19--0.22 &
  95.1 & 95.0 & 0.20--0.22 &
  92.4 & 92.3 & 0.20--0.22 &
  72.6 & 71.5 & 0.16--0.18 \\
 tanh-ImageNet &
  99.0 & 99.0 & 0.19--0.21 &
  \textbf{97.8} & \textbf{97.8} & 0.19--0.20 &
  \textbf{96.7} & \textbf{96.7} & 0.19--0.21 &
  90.9 & 90.8 & 0.18--0.20 \\
\midrule
\multicolumn{13}{c}{ResNet50} \\
\midrule
 Standard &
  \textbf{99.5} & \textbf{99.5} & 0.19--0.21 &
  16.0 & 14.5 & \textbf{0.14--0.15} &
  12.7 & 11.6 & 0.14--0.16 &
  10.9 & 9.5 & 0.15--0.17 \\
 ImageNet &
  98.5 & 98.5 & 0.19--0.21 &
  11.2 & 10.1 & 0.14--0.16 &
  10.0 & 8.9 & 0.15--0.17 &
  9.5 & 8.0 & 0.14--0.16 \\
 tanh-Standard &
  99.3 & 99.3 & 0.19--0.21 &
  93.3 & 93.3 & 0.18--0.20 &
  85.0 & 84.7 & 0.20--0.22 &
  27.6 & 25.4 & \textbf{0.13--0.14} \\
 tanh-ImageNet &
  99.0 & 99.0 & 0.19--0.21 &
  97.7 & 97.7 & 0.18--0.20 &
  96.6 & 96.6 & 0.18--0.20 &
  \textbf{93.3} & \textbf{93.2} & 0.18--0.20 \\
\bottomrule
\end{tabular}}
\end{table*}

\section{\uppercase{Results}}\label{sec:results}
In this section, we present the outcomes of our experiments on the
COVID-19 and MNIST tasks.

\subsection{Results on the COVID-19 Database}\label{sec:eval-covid}
\cref{tab:eval-covid} summarizes our results in performance (accuracy, F1-score) and practical privacy (empirical privacy leakage as $\Advinc$) over all privacy levels.

The non-private baseline at $\eps=\infty$ is dominated by the ImageNet variants, which reaches 95.9\% F1 for ResNet18.
The non-private tanh models generally show to perform less than their ReLU counterparts, making tanh less suited for non-private training.
Pneumonia pre-training does present an upgrade but smaller than ImageNet.
For the weakest privacy setting of $\eps=10$, we already find a steep utility-privacy trade-off of 16.2\%, when comparing the best F1-scores.
The ResNet18 ImageNet keeps the best performance.
However, the other tanh models now surpass their ReLU siblings.
The ResNet50 ReLU models struggle so much, that they do not provide a working model (0.0\% F1).
For both ResNet18 and ResNet50, the tanh-ImageNet now falls far behind.

Our results at $\eps=1$ present a substantial paradigm shift.
The ResNet18 ImageNet, our former winner, stops working at this level, with the only working ReLU model left being the ResNet18 Penumonia
Thus, at this stronger privacy setting, the tanh activation is essential to achieve feasible models for all variants.
We now see the tanh-Pneumonia delivering the best performing models with 72.5\% F1 on ResNet50 and 75.2\% accuracy on ResNet18.
From $\eps=10$ to $\eps=1$, we lose 6.9\% F1.
At $\eps=0.1$, the ResNet50 tanh-Pneumonia performs best with 69.4\% F1 and 73.0\% accuracy.
The ResNet18 sibling is still close in accuracy but falls further behind in F1.
On the ResNet50, the resulting trade-off of 3.1\% F1 compared to $\eps=1$ is less significant than before. 

Regarding privacy leakage given as $\Advinc$, we only see minor improvements when comparing non-private and private settings.
The initial observation is underlined by mean advantages across privacy levels: 0.23--0.25 ($\eps=\infty$), 0.22--0.23 ($\eps=10$), 0.20--0.22 ($\eps=1$), and 0.20--0.22 ($\eps=0.1$).
The differences are especially small in the better models. The lower performance models, however, offer less confidence in their predictions, making it harder for an attacker to correctly classify membership~\citep{rahman2018membership}.

\subsection{Results on the MNIST Database}\label{sec:eval-mnist}
We now refer to \cref{tab:eval-mnist} and omit the Pneumonia variants because there is no relation between the tasks.

The non-private performance trends show that MNIST is an easier task than COVID-19 detection because only one model falls short of reaching 99\% F1.
Furthermore, the tanh models are on par with the ReLu models even in the non-private setting.
Familiarly to the COVID-19 task, the ResNet18 ImageNet performs best at 99.5\% F1, but shares the crown with both Standard variants.
In private training, the ResNet50 ReLU models again start to fail at $\eps=10$.
They still achieve some F1 but scores of 14.5\% and 10.1\% can also be seen as unusable.

The other models, however, only see a smaller reduction in their performance than on COVID-19.
The tanh-ImageNet shines on MNIST, with the ResNet18 model achieving the best results at 97.8\% F1 and accuracy, which is just a 1.7\% reduction from $\eps=\infty$.
The ResNet50 counterpart closely follows at 97.7\%.
The tanh-ImageNet superiority carries over to $\eps=1$, where the ResNet18 tanh-ImageNet reigns at 96.7\% F1 and accuracy.
This again results in a marginal trade-off of 1.1\% from $\eps=10$ to $\eps=1$.
However, other than at $\eps=10$, the ImageNet on ReLU now also fails to deliver good F1 at 35.2\% on ResNet18.
The marginal trade-offs cannot be kept for $\eps=0.1$, where the ResNet50 tanh-ImageNet beats the ResNet18 sibling but now loses 3.4\% F1 and lands at 93.2\%.
The total F1 trade-off from $\eps=\infty$ to $\eps=0.1$ is still low at 6.3\% and considerably better than on the COVID-19 task.
F1 for ResNet18 Standard and ImageNet on ReLU decreases steeply at $\eps=0.1$, delivering rather unusable models.
Even the ResNet50 tanh-Standard now falls to a low of 25.4\% F1.

Shifting the view to our privacy analysis, non-private models on MNIST show a slightly lower proneness to MIAs than on COVID-19.
For models who only achieve low F1-scores, we again see lower $\Advinc$ values stemming from their corresponding low memorization~\citep{rahman2018membership}.
The better performing variants, on the other hand, again show just minor changes in leakage measurements.
When we exclude the misleading low-performing results with under 50\% F1, the following mean advantages confirm our point: 0.19--0.21 ($\eps=\infty$), 0.18--0.20 ($\eps=10$), 0.19--0.21 ($\eps=1$), and 0.17--0.19 ($\eps=0.1$).
We thus only find a small leakage difference between COVID-19 and MNIST, with COVID-19 models being slightly more prone in general.

\section{\uppercase{Discussion}}\label{sec:discussion}
\begin{figure*}[t]
  \centering
  \begin{minipage}{0.75\linewidth}
    \subcaptionbox{COVID-19\label{subfig:covid}}{\includegraphics[width=0.49\linewidth]{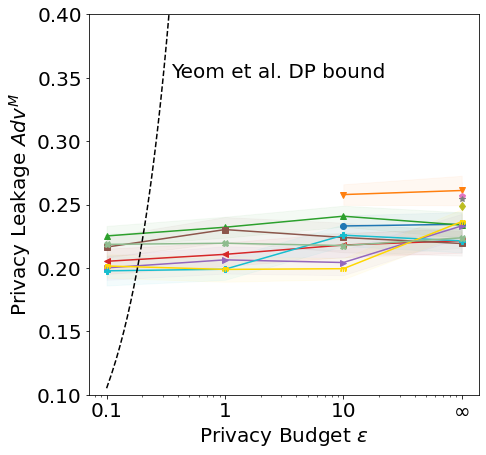}}
    \subcaptionbox{MNIST\label{subfig:mnist}}{\includegraphics[width=0.49\linewidth]{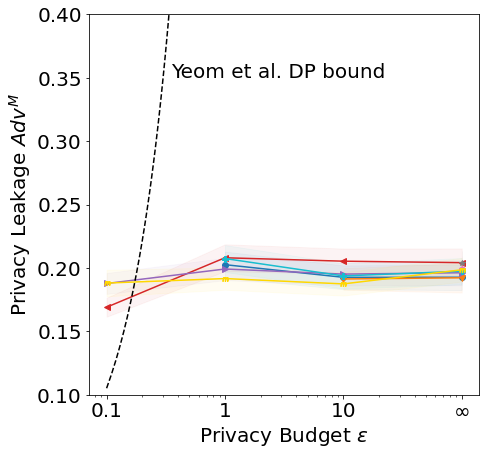}}
  \end{minipage}
  \begin{minipage}{0.24\linewidth}
      \raisebox{0.5in}{\includegraphics[width=1.0\linewidth]{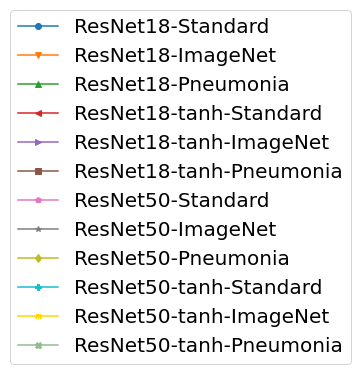}}
  \end{minipage}
  \vspace{0.1in}
  \caption{
    Empirical privacy leakage results from MIAs are given as our 95\% CI membership advantage ($\Advinc$) and plotted across the different privacy budgets.
    Model variants can be distinguished with the legend.
    We exclude data points with <50\% F1 because low performance disproportionately reduces leakage.
    A dotted line shows the DP bound from \citet{yeom2018privacy}.
  }\label{fig:graphs}
\end{figure*}

We now revisit the open gaps from the related work discussed in \cref{sec:related} and review the outcomes of our proposed solutions from \cref{sec:methods}.
We again refer to \cref{tab:eval-covid,tab:eval-mnist} for our evaluation results and to \cref{tab:eval-related} for a short and organized showcase of the compared methods from related work~\citep{muftuouglu2020differential,zhang2021feddpgan,ho2022dpcovid}.
In the following we evaluate our proposed improvements:

\textit{Datasets.}
    We achieve a more balanced data basis than before by utilizing undersampling and class weights.
    To better evaluate on the still skewed data, we add the F1-score metric.
    The advantage regarding accuracy is visible in the COVID-19 results, where both metrics differ regularly and F1 thus reveals models that perform better on the minority class COVID-19.
    That F1 better represents the underlying class distribution is further demonstrated by comparison to the more balanced MNIST dataset, where accuracy and F1-score are almost identical.
    
\textit{Privacy budgets.}
    In contrast to related work, we are able to achieve working COVID-19 detection models, while adhering to strong privacy budgets of $\eps \leq 1$.
    By additionally evaluating different architectures over multiple privacy levels, we deduce favorable architectural decisions for keeping good utility-privacy trade-offs in DP-SGD.
    Our findings for training private models are summarized in \cref{dis:models}.
    
\textit{Practical privacy.}
    By including an empirical study on practical privacy though MIAs, we gain insights into the relationship between DP and privacy leakage.
    In \cref{dis:practical} we derive the implications stemming from our empirical analysis.
    The results allow us to improve the utility-privacy trade-off while keeping the same practical privacy.

\subsection{Building Better DP-SGD Models}\label{dis:models}
With our experiments, we transfer the results from \citet{papernot2021tempered} to deeper and pre-trained networks and are able to confirm the tanh advantage over ReLU in low \eps DP-SGD training.
Our private models with strong \eps-guarantees of $\eps=1$ and $\eps=0.1$ rely on this change, while the non-private and less private models still prefer the ReLU activation function.

A commonality between our best performers is that they were subjected to pre-training.
While all best non-private models are pre-trained on ImageNet, this trend only continues in all private models on the MNIST database.
The same ImageNet-based models underperform on the COVID-19 task, when looking at settings of $\eps=1$ and $\eps=0.1$, which might be related to the different contents in both tasks.
On the COVID-19 task, we introduce task-specific pre-training on pneumonia images, that leads to superior performance in our most private settings.

We could not fully confirm that larger models perform worse in DP-SGD~\citep{papernot2021tempered}.
The ResNet50 especially wins at the most private setting of $\eps=0.1$. 
Model size, however, seems to play a role, when examining the earlier failure of the private ReLU models in the bigger ResNet50.

In summary, our results support the existing belief that model architectures should be specifically adjusted for private DP-SGD training, where established standards from non-private training do not necessarily provide the same advantages~\citep{papernot2021tempered,abadi2016deep}.
Examples are the switch from ReLU to tanh activation and the superiority of Pneumonia pre-training to ImageNet pre-training in the private COVID-19 models.

%\subsection{Connecting the Dots Regarding Practical DP}\label{dis:practical}
\subsection{Insights Regarding Practical DP}\label{dis:practical}
For this section, we refer to \cref{fig:graphs} that visualizes the results for our estimated privacy leakage from MIAs at the different $\eps$-budgets.

% theory vs practice
In both \cref{subfig:covid,subfig:mnist}, we include the ($\eps$,$\delta$)-DP bound on $\Advinc$ from \cref{eq:dpbound}, which is based on \citet{yeom2018privacy}.
The bound already surpasses our plotted maximum of 0.5 $\Advinc$ long before $\eps=1$, which shows the large discrepancy between the theoretically assumed worst-case and practice.
Simultaneously, no model actually trained for $\eps=0.1$ is able to conform to the calculated bound.
Such inconsistencies can also be found in related work~\citep{yeom2018privacy,jayaraman2019evaluating}.
As an explanation, \citet{yeom2018privacy} unveil that, in practice, the training set error distributions are not exactly Gaussian, sometimes leading to better attack performance than predicted by theory.
Even though COVID-19 and MNIST have rather opposing priors, where the former's classes are skewed and the latter's roughly balanced, we see the same inconsistencies in both evaluations.
Thus, the given theoretical bound does not seem reliable for deriving a limit on the real world threat in our case.

% better trade-off
For both COVID-19 and MNIST, the leakage almost describes a flat line with just negligible changes over all privacy settings.
We spot a few outliers\footnote{On COVID-19 the outliers are both tanh-ImageNet models, which reduce their leakage from non-private to $\eps=10$, and the ResNet50 tanh-Standard doing the same from $\eps=10$ to $\eps=1$. There is also one outlier on MNIST, where the Resnet18-tanh-Standard improves privacy at $\eps=0.1$.}, that see a bigger drop in leakage risk, which however, is mainly attributed to their gravely lowered performance (>20\% F1 loss) and accordingly reduced memorization~\citep{rahman2018membership}.
Even the non-private models exhibit almost the same leakage as the private models and thus, including DP-guarantees does not imply the expected improvement to practical MIA proneness.
The plateau in privacy leakage can enable the use of lesser DP-guarantees, while still providing the same practical privacy
The MNIST models show to generally leak slightly less than on COVID-19, leading to stronger privacy needs for COVID-19.
The existing difference in MIA risk between COVID-19 and MNIST suggests, that privacy estimation can be an important tool for assessing task- and data-dependent threats from attacks.
Thus, such estimates can in turn support tuning trade-offs according to task-specific privacy needs.

The findings suggest room for utilizing weaker DP-guarantees on both tasks when defending against our MIA-specific setting.
Practical privacy is already strong in our less private and even non-private models.
We are thus able to improve the utility-privacy trade-off on both datasets at no practical privacy cost.
We could introduce even stronger guarantees to possibly further improve MIA defense, however, this would lead to an even bigger utility loss and in turn result in impractical performance.

% still use DP!
We want to emphasize that there is still a need for strong theoretical privacy guarantees~\citep{nasr2021adversary}.
As stated in \cref{sec:back-repelling}, $\eps$-guarantees from DP limit the maximum amount of possible information leakage.
In actual attacks, however, the theoretical ceiling might differ notably from the practical threat as shown in this and other works presented in \cref{sec:related-practical}.
From this, we should not conclude that DP is unnecessary, since future adversaries could find better attacks that make an earlier empirical evaluation invalid.
To cover for such cases it is therefore advised to keep a reasonably strong privacy guarantee even when tuning for better trade-offs.
Thus, we would rather choose a COVID-19 model at $\eps=10$ than at $\eps=\infty$, even though both exhibit almost the same practical privacy levels.
The model at $\eps=10$ performs better than the one at $\eps=1$ and, in contrast to $\eps=\infty$, still provides a provable DP guarantee to limit future adversaries.

\section{\uppercase{Conclusion}}\label{sec:conclusion}
% problem statement / related work
Within this piece of work, we close several open gaps in the field of private COVID-19 detection from X-ray images.
In comparison to related work on the topic, we improve data handling regarding imbalances, deliver a more robust privacy evaluation, and are the first to investigate the implications concerning practical privacy~\citep{muftuouglu2020differential,zhang2021feddpgan,ho2022dpcovid}.

% dpsgd
We introduce a selection of yet untested architectural ML model choices to the COVID-19 task.
Through our evaluation, we are able to compare the setups in a common environment.
Since well-known practices from non-private training are not always transferable to DP-SGD training, it is important to gather a wide range of results for finding the best models.
We are therefore making a noticeable contribution by exploring a range of different architectures on the COVID-19 and MNIST tasks.

% dp vs. mia
Our practical privacy analysis reveals that assessing attack-specific threats from black-box MIAs in a practical scenario helps finding appropriate privacy attributes and can thus improve the utility-privacy trade-off at no practical cost.
On both the COVID-19 and MNIST datasets, we found just minor improvements from the provided theoretical DP-guarantee regarding practical defense against our MIAs.
Instead, our tested models almost showed the same strong repelling properties across all privacy levels---even for non-private models.
By confirming this plateau for both datasets, we are able to reduce the required DP guarantees for both tasks without sacrificing attack-specific practical privacy.
Our attacks are slightly more successful on the COVID-19 task, showing that it needs stricter privacy than MNIST and that practical privacy analysis is important for identifying the task-specific initial MIA threat.

% keep dp
We still advocate the use of DP and would not recommend to risk publishing non-private COVID-19 detection models.
Instead, if justified by a practical privacy analysis, the \eps-guarantee can be tuned to a more favorable utility-privacy trade-off that through the inclusion of a reasonable DP-guarantee still limits the worst-case privacy leakage from future attacks.

% future work
As a brief outlook into possible future work, it would be beneficial to extend our evaluation by applying practical privacy analysis to more datasets, especially with different underlying tasks.
Another venture could be to derive best practices and ultimately a taxonomy regarding advantageous architectural decisions when training DP-SGD models.

%\section*{\uppercase{Availability}} % release put github back here?

\section*{\uppercase{Acknowledgments}} % release reviewers
%We thank the reviewers for their helpful feedback and our colleagues for insights on earlier drafts.
We thank our colleagues for insights on earlier drafts.
The authors acknowledge the financial support by the Federal Ministry of Education and Research of Germany and by the S\"achsische Staatsministerium f\"ur Wissenschaft Kultur und Tourismus in the program Center of Excellence for AI-research "Center for Scalable Data Analytics and Artificial Intelligence Dresden/Leipzig", project identification: ScaDS.AI.
Computations were done (in part) using resources of the Leipzig University Computing Centre.

\section*{\uppercase{Ethical Principles}}
All patient data originated from public sources provided for research purposes and was solely used within the limited scope of this work.

\bibliographystyle{apalike}
{\small
\bibliography{paper}}

\section*{\uppercase{Appendix}} % release put back proof and also reference in corresponding section
\appendix
\section{\uppercase{Proof of Equation 3}}\label{sec:dpbound}
In the following \cref{thm:dp-bound}, we formally prove \cref{eq:dpbound}.
With the prerequisites of Experiment 1 and Definition 4 from \citet{yeom2018privacy}, we adapt their Theorem 1 and corresponding proof from $\eps$-DP to ($\eps$,$\delta$)-DP.
\begin{theorem}
	\label{thm:dp-bound}
    Let 
        $A$ be an ($\eps$,$\delta$)-differentially private learning algorithm, 
	    \A be a membership adversary, 
	    $\Advinc$ the membership advantage of \A, 
	    n be a positive integer, and 
	    D be a distribution over data points (x, y).
	Then we have:
	\[
	\Advinc(\A, A, n, \D) \le e^\eps - 1 + \delta.
	\]
\end{theorem}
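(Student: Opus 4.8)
\section*{Proof Proposal}

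The plan is to follow the structure of Theorem~1 in \citet{yeom2018privacy} and simply carry the additive $\delta$ term through their argument, since the only change from $\eps$-DP to $(\eps,\delta)$-DP is the extra slack in \cref{eq:dp}. First I would recall the membership experiment underlying $\Advinc$: a bit $b$ is chosen uniformly, the challenge point is a genuine training sample when $b=1$ and a fresh independent draw from \D when $b=0$, so that by definition $\Advinc = \Pr[\hat b = 1 \mid b = 1] - \Pr[\hat b = 1 \mid b = 0]$, i.e.\ the gap between the adversary's true- and false-positive rates.

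The key device is a single-record coupling that turns the two worlds into neighboring datasets. I would draw $S = (z_1,\dots,z_n) \sim \D^n$, an additional fresh point $z_0 \sim \D$, and an index $i$ uniformly from $\{1,\dots,n\}$. In the $b=1$ world the algorithm trains on $S$ and the challenge is $z_i$; in the $b=0$ world it trains on the dataset $S'$ obtained from $S$ by replacing $z_i$ with $z_0$, while the challenge remains $z_i$. Because all points are i.i.d., the pair $(S', z_i)$ has exactly the marginal required by the $b=0$ branch (training set distributed as $\D^n$, challenge independent and distributed as \D), and $(S, z_i)$ matches the $b=1$ branch. Crucially, $S$ and $S'$ differ in exactly the $i$-th record, so they are neighboring in the sense of \cref{eq:dp}.

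Next I would condition on the challenge and all coordinates, writing $g_T(z)$ for the probability that \A accepts when the model is trained on $T$ and fed challenge $z$ (over the internal randomness of $A$ and \A). For a fixed challenge $z_i$, the event ``\A outputs $1$'' depends only on the model output, hence is a post-processing of the differentially private mechanism; applying $(\eps,\delta)$-DP to the neighboring pair $S, S'$ gives the pointwise bound $g_S(z_i) \le e^{\eps}\, g_{S'}(z_i) + \delta$. Taking expectations over $S, z_0, i$ and using the marginals identified above yields $\Pr[\hat b = 1 \mid b = 1] \le e^{\eps}\,\Pr[\hat b = 1 \mid b = 0] + \delta$. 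Subtracting the false-positive rate and bounding it by $1$ (legitimate since $e^{\eps}-1 \ge 0$) gives $\Advinc \le (e^{\eps}-1)\,\Pr[\hat b = 1 \mid b = 0] + \delta \le e^{\eps} - 1 + \delta$, as claimed.

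The main obstacle is not the $\delta$ bookkeeping, which is routine, but justifying the pointwise application of DP: one must argue that for a fixed challenge the adversary's acceptance decision is a measurable event on the mechanism's output, so that post-processing immunity lets the $(\eps,\delta)$-DP guarantee apply, and verify that the swap coupling reproduces \emph{exactly} the two conditional distributions in the definition of $\Advinc$. Once those are in place, the inequality falls out by taking expectations and discharging the factor on the false-positive rate.
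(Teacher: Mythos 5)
Your proposal is correct and rests on the same skeleton as the paper's proof: both use the replace-one-record coupling from Yeom et al.'s Experiment~1 (train on $S$ versus on $S^{(i)}$ with the $i$-th record swapped for a fresh draw), both write $\Advinc$ as the gap between true- and false-positive rates, and both finish by bounding the adversary's output by $1$. The step where the DP guarantee enters is genuinely different, though. The paper assumes without loss of generality that the trained model lies in a finite set $\{A^1,\dots,A^k\}$ and applies the inequality $\Pr[\modeli = A^j] \le e^\eps \Pr[\model = A^j] + \delta$ outcome by outcome before summing over $j$. You instead fix the challenge point, view the adversary's acceptance decision as a (randomized) post-processing of the mechanism's output, and apply the $(\eps,\delta)$-DP guarantee to that single event, which yields $\Pr[\hat b = 1 \mid b = 1] \le e^\eps \Pr[\hat b = 1 \mid b = 0] + \delta$ directly and then $\Advinc \le (e^\eps - 1)\Pr[\hat b = 1 \mid b = 0] + \delta \le e^\eps - 1 + \delta$. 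Your route buys two things: it needs no finite-support assumption on the model space, and it uses $(\eps,\delta)$-DP at the level it is actually defined --- as a guarantee on events rather than on individual outcomes --- so the single additive $\delta$ appears once; applying the per-outcome inequality and summing over $k$ outcomes, as in the paper's write-up, would naively accumulate the slack to $k\delta$ and requires extra care to recover the stated bound. The obligations you flag (measurability of the acceptance event so post-processing applies, and checking that the swap coupling reproduces both conditional distributions of the membership experiment exactly) are the right ones and are routine to discharge.
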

\begin{proof}
    According to Definition 4 in~\cite{yeom2018privacy}, $\Advinc(\A,A,n,\D)$ can be expressed as the difference between \A's true and false positive rates
    \begin{equation} 
    \label{eq:altadv}
    \Advinc = \Pr[\A = 0 \mid b = 0] - \Pr[\A = 0 \mid b = 1],
    \end{equation}
    where $\Advinc$ is a shortcut for $\Advinc(\A,A,n,\D)$.

	Given $S = (z_1, \ldots, z_n) \sim \D^n$ and an additional point $z' \sim \D$, define $S^{(i)} = (z_1, \ldots, z_{i-1}, z', z_{i+1}, \ldots, z_n)$. Then, $\A(z',\model,n,\D)$ and $\A(z_i,\modeli,n,\D)$ have identical distributions for all $i \in [n]$, so we can write:
	\begin{align*}
	 	\Pr[\A = 0 \mid b = 0] &= 1 - \E_{S\sim\D^n}\left[\frac{1}{n}\sum_{i=1}^n \A(z_i, \model, n, \D)\right] \\
	 	\Pr[\A = 0 \mid b = 1] &= 1 - \E_{S\sim\D^n}\left[\frac{1}{n}\sum_{i=1}^n \A(z_i, \modeli, n, \D)\right]
	\end{align*}
	The above two equalities, combined with Equation~\ref{eq:altadv}, gives:
	\begin{equation}
	 	\label{eq:dpadv}
	 	\Advinc = \E_{S\sim\D^n}\left[\frac{1}{n}\sum_{i=1}^n \A(z_i, \modeli, n, \D) - \A(z_i, \model, n, \D)\right]
	\end{equation}
	
	Without loss of generality for the case where models reside in an
	infinite domain, assume that the models produced by $A$ come from the
	set $\{A^1, \ldots, A^k\}$. ($\eps$,$\delta$)-DP guarantees that for
	all $j \in [k]$,
	\[
	 	\Pr[\modeli = A^j] \le e^\eps\Pr[\model = A^j] + \delta .
	\]
	Using this inequality, we can rewrite and bound the right-hand side of Equation~\ref{eq:dpadv} as
	\begin{align*}
	 	&\sum_{j=1}^k \E_{S\sim\D^n}\Bigg[\frac{1}{n}\sum_{i=1}^n \Pr[\modeli = A^j] - \Pr[\model = A^j] \cdot \A(z_i, A^j, n, \D)\Bigg] \\
	 	& \le \sum_{j=1}^k \E_{S\sim\D^n}\left[(\delta + (e^\eps - 1) \Pr[\model = A^j]) \cdot \frac{1}{n}\sum_{i=1}^n \A(z_i, A^j, n, \D)\right],
	\end{align*}
	which is at most $e^\eps - 1 + \delta$ since $\A(z, A^j, n, \D) \le 1$ for any $z$, $A^j$, $n$, and $\D$.
    %\qed
\end{proof}

%\textbf{This is page: \thepage} % release page number for drafting

\end{document}